\newtheorem{theorem}{Theorem}
\newtheorem{lemma}{Lemma}
\title{Offline Reinforcement Learning with Soft Behavior Regularization}
\author{
    Anonymous Authors

}
\title{My Publication Title --- Single Author}
\author {
    Author Name
}
\title{Offline Reinforcement Learning with Soft Behavior Regularization}
\author {
    % Authors
    Haoran Xu\textsuperscript{\rm 1},
    Xianyuan Zhan\textsuperscript{\rm 2},
    Jianxiong Li\textsuperscript{\rm 2},
    Honglei Yin\textsuperscript{\rm 1}
}
\begin{document}

% \linenumbers

\maketitle

\begin{abstract}
Most prior approaches to offline reinforcement learning (RL) utilize \textit{behavior regularization}, typically augmenting existing off-policy actor critic algorithms with a penalty measuring divergence between the policy and the offline data.
However, these approaches lack guaranteed performance improvement over the behavior policy. In this work, we start from the performance difference between the learned policy and the behavior policy, we derive a new policy learning objective that can be used in the offline setting, which corresponds to the advantage function value of the behavior policy, multiplying by a state-marginal density ratio.
We propose a practical way to compute the density ratio and demonstrate its equivalence to a state-dependent behavior regularization. Unlike state-independent regularization used in prior approaches, this \textit{soft} regularization allows more freedom of policy deviation at high confidence states, leading to better performance and stability. We thus term our resulting algorithm Soft Behavior-regularized Actor Critic (SBAC).
Our experimental results show that SBAC matches or outperforms the state-of-the-art on a set of continuous control locomotion and manipulation tasks.
\end{abstract}

\section{Introduction}
Reinforcement learning (RL) is an increasingly important technology for developing highly capable AI systems, it has achieved great success in domains like games~\cite{mnih2013playing}, recommendation systems~\cite{choi2018reinforcement}, and robotics~\cite{levine2016end}, etc. However, the fundamental \textit{online} learning paradigm in RL is also one of the biggest obstacles to RL's widespread adoption, as interacting with the environment can be costly and dangerous in real-world settings. Furthermore, even in domains where online interactions are feasible, we might still prefer to utilize previously collected \textit{offline} data, as online data collection itself is shown to cause poor generalization~\cite{kumar2020discor}.

Offline Reinforcement Learning, also known as batch RL or data-driven RL, aims at solving the abovementioned problems by learning effective policies solely from offline static data, without any additional online interactions.
This setting offers the promise of reproducing the most successful formula in Machine Learning (e.g., CV, NLP), where we combine large and diverse datasets (e.g. ImageNet) with expressive function approximators to enable effective generalization in complex real-world tasks~\cite{zhan2021deepthermal}.

In contrast to offline RL, off-policy RL uses a replay buffer that stores transitions that are actively collected by the policy throughout the training process. Although off-policy RL algorithms are considered able to leverage any data to learn skills, past work has shown that they cannot be directly applied to static datasets, due to the extrapolation error of the Q-function caused by out-of-distribution actions~\cite{fujimoto2019off}, and the error can not be eliminated without requiring growing batch of online samples.

Prior works tackle this problem by ensuring that the learned policy stays "close" to the behavior policy via behavior regularization. This is achieved either by explicit constraints on the learned policy to only select actions where $(s, a)$ has sufficient support under the behavior distribution~\cite{fujimoto2019off,ghasemipour2021emaq}; or by adding a regularization term that calculates some divergence metrics between the learned policy and the behavior policy~\cite{wu2019behavior,siegel2019keep,zhang2020brac+,dadashi2021offline}, e.g., KL divergence~\cite{jaques2020human} or Maximum Mean Discrepancy (MMD)~\cite{kumar2019stabilizing}. While straightforward, these methods lack guaranteed performance improvement against the behavior policy.
They also require an accurate $Q^{\pi}$, which is hard to estimate due to the abovementioned distribution shift issue and the iterative error exploitation issue caused by dynamic programming.

In this work, we propose an alternative approach, we start from a conceptual derivation from the perspective of guaranteed policy improvement against the behavior policy. However, the derived objective requires states sampled from the learned policy, which is impossible to get in the offline setting. We then derive a new policy learning objective that can be used in the offline setting, which corresponds to the advantage function value of the behavior policy, multiplying by a discounted marginal state density ratio. 
We propose a practical way to compute the density ratio and demonstrate its equivalence to a state-dependent behavior regularization. Unlike state-independent regularization used in prior approaches, our regularization term is softer.
It allows more freedom of policy deviation at high confidence states, leading to better performance, it also alleviates the distributional shift problem, making the learned policy more stable and robust when evaluating in the testing environment.

We thus term our resulting algorithm Soft Behavior-regularized Actor Critic (SBAC).
We present an extensive evaluation of SBAC on standard MuJoCo benchmarks and Adroit hand tasks with human demonstrations. We find that SBAC achieves state-of-the-art performance compared to a variety of existing model-free offline RL methods. We further show that while SBAC learns better policies, it also achieves much more stable and robust results during the online evaluation~\cite{fujimoto2021minimalist}.

\section{Background}
In this section, we introduce the notation and assumptions used in this paper, as well as provide a review of related methods in the literature.

\subsection{Reinforcement Learning}
We consider the standard fully observed Markov Decision Process (MDP) setting~\cite{sutton1998introduction}. 
An MDP can be represented as $\mathcal{M}=(\mathcal{S}, \mathcal{A}, P, r, \rho, \gamma)$ where $\mathcal{S}$ is the state space, $\mathcal{A}$ is the action space, $P(\cdot | s, a)$ is the transition probability distribution function, $r(s, a)$ is the reward function, $\rho$ is the initial state distribution and $\gamma$ is the discount factor, we assume $\gamma \in (0, 1)$ in this work. The goal of RL is to find a policy $\pi(\cdot | s)$ that maximizes the expected cumulative discounted reward starting from $\rho$ as 
\begin{align*}
\max_{\pi} \underset{ \substack{ s_{0} \sim \rho, a_{t} \sim \pi\left(\cdot | s_{t} \right) \\ s_{t+1} \sim P\left(\cdot | s_{t}, a_{t} \right) } } {\mathbb{E}}\left[\sum_{t+1}^{\infty} \gamma^{t} r\left( s_{t}, a_{t} \right)\right].
\end{align*}

In the commonly used actor critic paradigm, one optimizes a policy $\pi_{\theta}(\cdot | s)$ by alternatively learning a Q-function $Q_{\psi}$ to minimize squared Bellman evaluation errors over single step transitions $\left( s, a, r, s' \right)$ as 
\begin{align}
\label{eq_q_evaluation}
J(Q) = \underset{ \substack{ s,a,r,s' \sim \mathcal{B} \\ a' \sim \pi (\cdot | s') } } {\mathbb{E}} \left[\left( r+\gamma Q_{\psi'}\left(s', a'\right) - Q_{\psi}(s, a) \right)^{2}\right],
\end{align}
where $Q_{\psi'}$ denotes a target Q-function, which is periodicly synchronized with the current Q function.
Then, the policy is updated to maximize the Q-value, $\mathbb{E}_{a \sim \pi(\cdot \mid s)}\left[Q_{\psi}(s, a)\right]$.

\subsection{Offline Reinforcement Learning}
\label{sec_offline_rl}
In this work, we focus on the offline setting, in which the
agent cannot generate new experience data and the goal is to learn from a fixed dataset $\mathcal{B}=\left\{\left(s_{i}, a_{i}, s_{i}^{\prime}, r_{i}\right)\right\}_{i=1}^{N}$ consisting of single step transitions $\left\{\left(s_{i}, a_{i}, s_{i}^{\prime}, r_{i}\right)\right\}$. The dataset is assumed to be collected using an unknown behavior policy $\mu$ which denotes the conditional distribution $p(a|s)$ observed in the dataset. Note that $\pi_b$ may be multimodal distributions. 

We do not assume complete state-action coverage of the offline dataset, this is contrary to the common assumption in batch RL literature \cite{lange2012batch}, but is more realistic to the real-world setting. Under this setting, standard reinforcement learning methods such as SAC~\cite{haarnoja2018soft} or DDPG~\cite{lillicrap2016continuous} suffer when applied to these datasets due to extrapolation errors~\cite{fujimoto2019off,kumar2019stabilizing}.

To avoid such cases, behavior regularization is adopted to force the learned policy to stay close to the behavior policy~\cite{fujimoto2019off,kumar2019stabilizing,wu2019behavior}, these approaches share a similar framework, given by
\begin{align}
\label{eq_behavior_reg}
\max_{\pi} \underset{s \sim \mathcal{B}}{\mathbb{E}} \left[ \underset{a \sim \pi(\cdot|s)}{\mathbb{E}} \left[ Q^{\pi}(s,a) \right] - \alpha D(\pi(\cdot|s), \mu(\cdot|s)) \right],
\end{align}
where $D$ is some divergence measurement.

Although these approaches show promise, none provide improvement guarantees relative to the behavior policy and they share some common problems. 
The first problem is that these methods require an accurately estimated $Q^{\pi}$. However, this is hard to achieve due to the distribution shift between the behavior policy and the policy to be evaluated, which will introduce evaluation errors in Equation (\ref{eq_q_evaluation})~\cite{levine2020offline}. Furthermore, these errors will be accumulated and propagated across the state-action space through the iterative dynamic programming procedure.
The second problem is that, even though we can estimate an accurate Q function, the behavior regularization term may be too restrictive, which will hinder the performance of the learned policy. An ideal behavior regularization term should be state-dependent~\cite{sohn2020brpo}. This will make the policy less conservative, exploit large policy changes at high confidence states without risking poor performance at low confidence states. This also holds more theoretical guarantee~\cite{lee2020batch} which is typically missing in current approaches.

\section{Soft Behavior-regularized Actor Critic}
We now continue to describe our approach for behavior regularization in offline RL settings, which aims to circumvent the following issues associated with competing methods described in the previous section, including \textbf{(1)} lack policy improvement guarantee; \textbf{(2)} hard to estimate $Q^{\pi}$ due to distributional shift; \textbf{(3)} have too much conservatism due to state-independent regularization weight. We begin with a conceptual derivation of our method from the perspective of guaranteed policy improvement against the behavior policy. We then demonstrate our method from a different perspective, we show the equivalence of our method to a kind of soft divergence regularization, this soft regularization considers the marginal-state visitation difference and is state-dependent.

\subsection{Policy Improvement Derivation}
\label{sec_improvement_derivation}
We start from the difference in performance between the two policies. The following lemma allows us to compactly express the difference using the discounted marginal state visitation distribution, $d^{\pi}$, defined by $d^{\pi}(s) = (1-\gamma) \sum_{t=0}^{\infty} \gamma^t P(s_t=s|\pi)$.
\begin{lemma} [Performance difference~\cite{kakade2002approximately}] For any two policy $\pi$ and $\mu$,
\begin{align}
\label{eq_performance_diff}
\Delta(\pi, \mu) = J(\pi) - J(\mu) &= \frac{1}{1-\gamma} \ \underset{\substack{s \sim d^{\pi} \\ a \sim \pi}}{\mathbb{E}} \left[ A^{\mu}(s,a) \right]. 
% \\&= \frac{1}{1-\gamma} \ \underset{\substack{s \sim d^{\pi} \\ a \sim \pi}}{\mathbb{E}} \left[ Q^{\mu}(s,a)-V^{\mu}(s) \right] 
\end{align}
\end{lemma}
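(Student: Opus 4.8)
The plan is to prove the identity by accumulating the advantage function $A^\mu$ along trajectories generated by $\pi$ and exploiting a telescoping cancellation. The starting observation is that the advantage of $\mu$ admits the one-step form $A^\mu(s,a) = r(s,a) + \gamma \mathbb{E}_{s' \sim P(\cdot|s,a)}[V^\mu(s')] - V^\mu(s)$, which follows directly from $A^\mu = Q^\mu - V^\mu$ together with the Bellman equation for $Q^\mu$. The crucial point is that the dynamics $P$ and the reward $r$ do not depend on the policy, so this identity for the \emph{evaluation} policy $\mu$ remains valid even when $(s,a)$ is sampled under the \emph{target} policy $\pi$; this is precisely what makes the off-policy bookkeeping go through.

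First I would form the quantity $\mathbb{E}_{\tau \sim \pi}[\sum_{t=0}^{\infty} \gamma^t A^\mu(s_t,a_t)]$, where $\tau = (s_0,a_0,s_1,\dots)$ is a trajectory with $s_0 \sim \rho$, $a_t \sim \pi(\cdot|s_t)$, and $s_{t+1}\sim P(\cdot|s_t,a_t)$. Substituting the one-step form above splits the sum into two pieces. The reward piece $\sum_t \gamma^t r(s_t,a_t)$ reassembles, in expectation, into $J(\pi)$ by definition. The value piece $\sum_{t=0}^{\infty} (\gamma^{t+1} V^\mu(s_{t+1}) - \gamma^t V^\mu(s_t))$ telescopes to $-V^\mu(s_0)$, whose expectation over $s_0 \sim \rho$ is exactly $-J(\mu)$. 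Hence $\mathbb{E}_{\tau \sim \pi}[\sum_t \gamma^t A^\mu(s_t,a_t)] = J(\pi) - J(\mu)$.

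It remains to rewrite the left-hand side in terms of $d^\pi$. I would interchange the summation and expectation and group by the time index, writing $\sum_{t=0}^{\infty} \gamma^t \mathbb{E}[A^\mu(s_t,a_t)] = \sum_s \big(\sum_t \gamma^t P(s_t = s|\pi)\big)\mathbb{E}_{a\sim\pi(\cdot|s)}[A^\mu(s,a)]$. Recognizing the inner weight as $d^\pi(s)/(1-\gamma)$ gives $\frac{1}{1-\gamma}\mathbb{E}_{s\sim d^\pi, a \sim \pi}[A^\mu(s,a)]$, which matches the claimed expression.

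The main obstacle is analytic rather than algebraic: justifying that the telescoping sum actually collapses to $-V^\mu(s_0)$ requires the tail term $\gamma^t V^\mu(s_t)$ to vanish as $t \to \infty$, and justifying the interchange of the infinite sum with the expectation requires an integrability/dominated-convergence argument. Both are controlled by the standing assumptions $\gamma \in (0,1)$ and bounded rewards, which make $V^\mu$ (hence $A^\mu$) uniformly bounded and the series absolutely convergent; with these in hand the exchange is routine Fubini/dominated convergence. I would state these boundedness conditions explicitly at the outset so that every limit and interchange in the argument is licensed.
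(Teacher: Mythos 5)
Your proof is correct: the one-step decomposition $A^\mu(s,a) = r(s,a) + \gamma\,\mathbb{E}_{s'\sim P(\cdot|s,a)}[V^\mu(s')] - V^\mu(s)$ followed by the telescoping of $\gamma^t V^\mu(s_t)$ along trajectories of $\pi$, and the regrouping of $\sum_t \gamma^t P(s_t=s|\pi)$ into $d^\pi(s)/(1-\gamma)$, is exactly the standard argument, and your care about the vanishing tail term and the sum--expectation interchange (both licensed by $\gamma\in(0,1)$ and bounded rewards) is appropriate. The paper does not prove this lemma itself --- it imports it by citation from Kakade and Langford (2002) --- and your argument coincides with the proof given in that reference, so there is nothing to reconcile.
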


This lemma implies that maximizing Equation (\ref{eq_performance_diff}) will yield a new policy $\pi$ with guaranteed performance improvement over a given policy $\mu$. 
Unfortunately, this objective cannot be used in the offline setting, as it requires access to on-policy samples from $d^{\pi}$, which is only available by interacting with the environment.

One way to mitigate this issue, and has been largely used in on-policy RL literature, is to assume that policy $\pi$ and $\mu$ give rise to similar state visitation distributions if they are close to each other, in other words, $\pi$ is in the ``trust-region'' of $\mu$~\cite{schulman2015trust,achiam2017constrained,levin2017markov}, so that we can use $s \sim d^{\mu}$ in Equation (\ref{eq_performance_diff}) to get a surrogate objective that approximates $\Delta(\pi, \mu)$. 
\begin{align}
\label{eq_surrogate}
\Delta(\pi, \mu) \geq \frac{1}{1-\gamma} \underset{\substack{s \sim d^{\mu} \\ a \sim \pi}}{\mathbb{E}}\left[A^{\mu}(s,a)-\frac{2 \gamma \epsilon^{\mu} }{1-\gamma} D_{TV}(\pi(\cdot | s) \| \mu(\cdot | s))\right],
\end{align}
where $\epsilon^{\mu}=\max _{s \in \mathcal{S}}\left|\mathbb{E}_{a \sim \pi(\cdot | s)}\left[A^{\mu}(s, a)\right]\right|$ and $D_{TV}$ is the total variation distance.
% This idea has been largely used in on-policy RL literature, and 
There are also other ways to make two policies stay close, e.g., constraining action probabilities by mixing policies~\cite{kakade2002approximately,pirotta2013safe} or clipping the surrogate objective \cite{schulman2017proximal,wang2020truly}.

However, as discussed in section~\ref{sec_offline_rl}, constraining the learned policy to stay close to the behavior policy will be too restrictive\footnote{Note that in the online setting, this issue is not problematic as we replace $\mu$ with $\pi_i$ at each iteration $i$, thus ensuring the learned policy with monotonically increased performance.}. 
Furthermore, constraining the distributions of the immediate future actions might not be enough to ensure that the resulting surrogate objective (\ref{eq_surrogate}) is still a valid estimate of the performance of the next policy. This will result in instability and premature convergence, especially in long-horizon problems.
Instead, we directly apply importance sampling~\citep{degris2012off} on Equation (\ref{eq_performance_diff}), yielding
\begin{align}
\label{eq_is}
\Delta(\pi, \mu) = \frac{1}{1-\gamma} \ \underset{\substack{s \sim d^{\mu} \\ a \sim \pi}}{\mathbb{E}} \left[ \frac{d^{\pi}(s)}{d^{\mu}(s)} \ A^{\mu}(s,a) \right].
\end{align}
this objective reasons about the long-term effect of the policies on the distribution of future states, and can be optimized fully offline given that we can compute the state visitation ratio $d^{\pi}(s)/d^{\mu}(s)$.
Let $w^{\pi}(s) = d^{\pi}(s)/d^{\mu}(s)$, now the remaining question is how to estimate $w^{\pi}(s)$ using offline data.

% This question belongs to the field of Off-policy Policy Evaluation (OPE), Indeed, most existing IS-based estimators compute the weight as the product of the importance ratios of many steps in the trajectory. Variances in individual steps accumulate multiplicatively, so that the overall IS weight of a random trajectory can have an exponentially high variance to result in an unreliable estimator.
We aim to estimate it by using the steady-state property of Markov processes
\cite{liu2018breaking,gelada2019off},
% \cite{hallak2017consistent,liu2018breaking,gelada2019off},
% (Gelada et al., 2018, Hallak et al., 2018, Qiang et al., 2018), 
given by the following theorem and the proof is provided in Appendix A.
\begin{theorem} Assume $\mu(a|s) > 0$ whenever $\pi(a|s) > 0$, we have function $w(s) = w^{\pi}(s)$ if and only if it satisfies
\begin{align*}
&\underset{\left(s, a, s^{\prime}\right) \sim d^{\mu}}{\mathbb{E}}\left[ \mathbb{D}(w(s')~\|~\mathcal{T}^{\pi}w(s')) \right]=0, \quad \forall s'. \\
&\text { with } \quad \mathcal{T}^{\pi} w(s') := (1-\gamma) + \gamma \underset{(s,a)|s'}{\mathbb{E}} \frac{\pi(a|s)}{\mu(a|s)} w(s),
\end{align*}
where $\mathbb{D}(\cdot \| \cdot)$ is some discrepancy function between distributions and $(s,a)|s'$ is a time-reserved conditional probability, it is the conditional distribution of $(s,a)$ given that their next state is $s'$ following policy $\mu$.
\end{theorem}

% Indeed, note that equation 3 is written "backwards": The occupancy measure of a state $s^{\prime}$ is written as a (discounted) function of previous states, as opposed to vice-versa.
Note that the operator $\mathcal{T}^{\pi}$ is different from the standard Bellman operator,
% indeed different from the Bellman operator (Puterman, 1994), 
although they bear some similarities. More specifically, given some state-action pair $(s, a)$, the Bellman operator is defined using next state $s'$ of $(s, a)$, while $\mathcal{T}^{\pi}$ is defined using previous state-actions $(s, a)$ that transition to $s'$. In this sense, $\mathcal{T}^{\pi}$ is time-backward. The function $w$ actually has the interpretation of the distribution over $\mathcal{S}$. Therefore, $\mathcal{T}^{\pi}$ describes how visitation flows from previous state $s$ to next state $s'$, which is called the \textit{backward flow operator} \cite{mousavi2019black}. 
Actually, the state visitation ratio $w^\pi$ is the unique fixed point of $\mathcal{T}^{\pi}$, that is $w^{\pi}=\mathcal{T}^{\pi}w^{\pi}$, we use this important property to formulate Theorem 1.
We also note that similar forms of $\mathcal{T}^{\pi}$ have appeared in the safe RL literature, usually used to encode constraints in a dual linear program for an MDP (e.g., \citealp{wang2008stable,satija2020constrained}).
However, the application of $\mathcal{T}^{\pi}$ for the state visitation ratio estimation problem and apply it in the offline RL problem appears new to the best of our knowledge.

There are many choices for $\mathbb{D}(\cdot \| \cdot)$ and different solution approaches (e.g., \citealp{nguyen2010estimating,dai2017learning}). 
In this work, we use the approach based on kernel Maximum Mean Discrepancy (MMD) \cite{muandet2017kernel}.
Given real-valued function $f$ and $g$ that defined on $\mathcal{S}$, we define the following bilinear functional $\mathbf{k}[\cdot;\cdot]$ as
\begin{align}
\label{eq_functional}
\mathbf{k}[f;g] :=\sum_{s \in \mathcal{S}, \bar{s} \in \mathcal{S}} f(s) k(s, \bar{s}) g(\bar{s}),
\end{align}
where $k(\cdot, \cdot)$ is a positive definite kernel function defined on $\mathcal{S}\times\mathcal{S}$, such as Laplacian and Gaussian kernels.

Let $\mathcal H$ be the reproducing kernel Hilbert space (RKHS) associated with the kernel function $\mathbf{k}$, the MMD between two distributions, $\mu_1$ and $\mu_2$, is defined by
\begin{equation*}
\mathbb{D}_{M}\left(\mu_{1} \| \mu_{2}\right):=\sup _{f \in \mathcal{H}}\left\{\mathbb{E}_{\mu_{1}}[f]-\mathbb{E}_{\mu_{2}}[f], \ \text { s.t. } \ \|f\|_{\mathcal{H}} \leq 1\right\}.
\end{equation*}
Here, $f$ may be considered as a discriminator, playing a similar role as the discriminator network in generative adversarial networks \cite{goodfellow2014generative}, to measure the difference between $\mu_1$ and $\mu_2$. 
A useful property of MMD is that it admits a closed-form expression~\cite{gretton2012kernel}:
\begin{align*}
\mathbb{D}_{M}\left(\mu_{1} \| \mu_{2}\right) &=\mathbf{k}\left[\mu_{1}-\mu_{2} ; \mu_{1}-\mu_{2}\right] \\
&=\mathbf{k}\left[\mu_{1} ; \mu_{1}\right]-2 \mathbf{k}\left[\mu_{1} ; \mu_{2}\right]+\mathbf{k}\left[\mu_{2} ; \mu_{2}\right],
\end{align*}
where $\mathbf{k}[\cdot;\cdot]$ is defined in Equation (\ref{eq_functional}), and we use its bilinear property. 
The expression for MMD does not involve the density of either distribution $\mu_1$  or $\mu_2$, and it can be optimized directly through sample-based calculation.

Given independent transition samples $\{(s_i, a_i, s'_i),$ $(s_j, a_j, s'_j)\} \sim \mathcal{B}$, applying MMD to Theorem 1 produces 
% \begin{gather}
% \scalebox{0.95}{$
\begin{align}
\label{eq_mmd}
&\mathbb{D}_{M}\left(w \| \mathcal{T}^{\pi} w\right)=\mathbf{k}\left[w ; w\right]-2 \mathbf{k}\left[w ; \mathcal{T}^{\pi}w\right]+\mathbf{k}\left[\mathcal{T}^{\pi}w;\mathcal{T}^{\pi} w\right] \notag \\ 
&= \sum_{i,j} w(s'_i) w(s'_j) \cdot k(s'_i,s'_j) -2 \sum_{i,j} w(s'_i) \Big[1-\gamma + \gamma \frac{\pi(a_j|s_j)}{\mu(a_j|s_j)} w(s_j)\Big]  \notag \\
{}&\cdot k(s'_i,s_j) + \sum_{i,j} \Big[1-\gamma + \gamma \frac{\pi(a_i|s_i)}{\mu(a_i|s_i)} w(s_i)\Big] \Big[1-\gamma + \gamma \frac{\pi(a_j|s_j)}{\mu(a_j|s_j)} w(s_j)\Big] \notag \\
{}&\cdot k(s_i,s_j).
\end{align}
% $}
% \end{gather}

In above formulation, both $w$ and $\mathcal{T}^{\pi}w$ are probability mass functions on $\mathcal{S}$, consisting of state-actions encountered in offline data $\mathcal{B}$. Therefore, we can optimize this objective by mini-batch training.
% \begin{eqnarray*}
% &\mathbf{k}\left(w ; w\right) = \frac{1}{n^2} \sum_{i,j} w_i w_j k(s_i,s_j) \\
% &\mathbf{k}\left(w ; \hat{\mathcal{B}}_{\pi} w\right) = \frac{1}{mn} \sum_{i,j} w_i (1-\gamma + \gamma \frac{\pi(a_j|s_j)}{\mu(a_j|s_j)} w_j) k(s_i,s'_j) \\
% &\mathbf{k}\left(\hat{\mathcal{B}}_{\pi} w ; \hat{\mathcal{B}}_{\pi} w\right) = \frac{1}{m^2} \sum_{i,j} (1-\gamma + \gamma \frac{\pi(a_i|s_i)}{\mu(a_i|s_i)} w_i) (1-\gamma + \gamma \frac{\pi(a_j|s_j)}{\mu(a_j|s_j)} w_j) k(s'_i,s'_j).
% \end{eqnarray*}
  
Recall that Theorem 1 needs to satisfy $\pi(a|s) > 0 \Longrightarrow \mu(a|s)>0$ to prevent $\mu(a|s)=0$. 
% logarithmic barrier functions. Intuitively, we would like to construct functions such that 1 ) if a constraint is satisfied, the penalty added to the reward function is zero, and 2) if the constraint is violated, the penalty goes to negative infinity. The logarithmic barrier functions satisfy these requirements, are easy to implement, and also provide nice analytical properties.
This condition actually means that $\pi$ should lie in the support of $\mu$, we accomplish this by using a log-barrier as a support measure, as the log function decreases exponentially when the probability densities of actions sampled from the learned policy are small under the behavior policy (i.e., $\mathbb{E}_{a \sim \pi(\cdot|s)} \left[ \log \mu(a|s) \right]$ is small). In practice, the knowledge of $\mu$ is not explicitly provided, and one usually uses behavioral cloning on the offline dataset to approximate $\mu$~\cite{wu2019behavior}. 
Note that since $\mu$ is trained as a normalized probability distribution, it will assign low probabilities to actions outside of $\mathcal{B}$.

After introducing our method to estimate state visitation ratio $w^{\pi}$, we now go back to Equation (\ref{eq_is}). Recall that $A^{\mu}(s, a) = Q^{\mu}(s, a) - V^{\mu}(s)$ and $V^{\mu}(s)$ is a constant with respect to $\pi$. Compiling all the above results, we can get the learning objective of $\pi$ as
\begin{align*}
\max_{\pi} \underset{s \sim \mathcal{B}}{\mathbb{E}} \left[ \underset{a \sim \pi(\cdot|s)}{\mathbb{E}} \left[ w^{\pi}(s) Q^{\mu}(s,a) \right] \right] \\
\mathrm{s.t.} \underset{a \sim \pi(\cdot|s)}{\mathbb{E}} \left[ \log \mu(a|s) \right] \geq \epsilon.
\end{align*}

We convert the constrained optimization problem to an unconstrained one by 
treating the constraint term as a penalty term~\cite{bertsekas1997nonlinear,le2019batch,wu2019behavior}, and we finally get the learning objective  as 
\begin{align}
\label{eq_objective}
\max_{\pi} \underset{s \sim \mathcal{B}}{\mathbb{E}} \left[ \underset{a \sim \pi(\cdot|s)}{\mathbb{E}} \left[ w^{\pi}(s) Q^{\mu}(s,a) + \alpha \log \mu(a|s) \right] \right].
\end{align}
Notice that $\mu(\cdot|s) \leq 1$, so we are optimizing a lower bound of the performance difference $\Delta(\pi, \mu)$, and this lower bound is much tighter than (\ref{eq_surrogate}) in most cases. 
Such policy improvement guarantee is not enforced in other offline RL algorithms.
% We are not familiar with similar performance improvement guarantees in other offline RL algorithms.

\subsection{Soft Regularization Derivation}
\label{sec_regularization_derivation}
We now provide a different view of our learning objective (\ref{eq_objective}), we can rearrange its form by changing the weight $w^{\pi}(s)$ from $Q^{\mu}$ to the log-barrier $\log \mu(a|s)$, yielding
% \[
% \max_{\pi} \underset{s \sim \mathcal{B}}{\mathbb{E}} \left[ \underset{a \sim \pi(\cdot|s)}{\mathbb{E}} \left[ Q^{\mu}(s,a) \right] + \alpha \ \frac{d^{\mu}(s)}{d^{\pi}(s)} \log \mu(a|s) \right].
% % D_{KL}(\pi(\cdot|s), \mu(\cdot|s))
% \]
\begin{align*}
% \max_{\pi} \underset{s \sim \mathcal{B}}{\mathbb{E}} \left[ \underset{a \sim \pi(\cdot|s)}{\mathbb{E}} \left[ Q^{\mu}(s,a)  - \alpha \ \frac{d^{\mu}(s)}{d^{\pi}(s)} \left(-\log \mu(a|s) \right) \right]\right] \\
\max_{\pi}\underset{s \sim \mathcal{B}}{\mathbb{E}} \left[ \underset{a \sim \pi(\cdot|s)}{\mathbb{E}} \left[ Q^{\mu}(s,a) \right] - \alpha \ \frac{d^{\mu}(s)}{d^{\pi}(s)} \underset{a \sim \pi(\cdot|s)}{\mathbb{E}} \left[ -\log \mu(a|s) \right] \right].
\end{align*}

Comparing to (\ref{eq_behavior_reg}), the objective used in previous offline RL algorithms, we actually have the following three changes: 
\textbf{(1)} We no longer need to estimate $Q^{\pi}$, instead, we only need to estimate $Q^{\mu}$, $Q^{\mu}$ can be much easily estimated and robust than $Q^{\pi}$, as the behavior policy $\mu$ remains fixed during training. It will no longer suffer from the over-estimation issue when computing the target Q values.
\textbf{(2)} We use KL divergence that excludes the learned policy's entropy as the regularization term, as $\mathbb{E}_{a \sim \pi(\cdot|s)} \left[ -\log \mu(a|s) \right] = D_{KL}(\pi(\cdot|s),\mu(\cdot|s)) - \mathbb{E}_{a \sim \pi(\cdot|s)} \left[ \log \pi(a|s) \right]$.
% $D_{KL}(\pi(\cdot|s),\mu(\cdot|s)) =\mathbb{E}_{a \sim \pi(\cdot|s)} \left[ \log \pi(a|s) - \log \mu(a|s) \right]$). 
The entropy term $\mathbb{E}_{a \sim \pi(\cdot|s)} \left[ \log \pi(a|s) \right]$ will result in more stochastic policy distribution, we argue that this property is only useful for exploration in the online setting~\cite{haarnoja2018soft} and will do harm to the offline setting as the optimal policy-induced from the offline data is close to deterministic.
\textbf{(3)} We use a state-dependent regularization weight, instead of the state-independent regularization weight used in prior approaches.
When $d^{\pi}(s)$ is small, the regularization term will be enlarged to make the policy match the state visitation distribution under the behavior policy, $d^{\mu}(s)$. This will alleviate the distribution shift problem~\cite{levine2020offline,fujimoto2021minimalist}, making the learned policy more stable and robust when evaluating in the environment. When $d^{\pi}(s)$ is large, it means that the policy $\pi$ has a higher chance to visit the state $s$, the regularization term will thus become small, allowing more freedom of behavior policy deviation,
% at those frequently visited states, 
thus leading to better performance.

These three changes and our policy improvement guarantee perfectly address the three aforementioned challenges. We call our algorithm \textbf{S}oft \textbf{B}ehavior-regularized \textbf{A}ctor \textbf{C}ritic (\textbf{SBAC})
% and its algorithm is presented as follows:
, we present the pseudocode of our method in Algorithms \ref{alg1} and implementation details in Appendix B.

\begin{algorithm}
% \small
\caption{Soft Behavior-regularized Actor Critic (SBAC)}
\label{alg1}
\begin{algorithmic}[1]
\REQUIRE % Input
Dataset $\mathcal{B}$, weight $\alpha$.
\STATE Initialize behavior model $\mu$, model $w$, policy network $\pi_\theta$, critic network $Q_{\psi}$ and target critic network $Q_{\psi^{\prime}}$ with $\psi^{\prime} \leftarrow \psi$.
\FOR{$t=0,1,...,M$}
\STATE Sample mini-batch transitions $(s,a, r, s', a') \sim \mathcal{B}$.
\STATE Update $\mu$ by maximizing $\log \mu(a|s)$.
\STATE Set $y=Q_{\psi'}\left(s', a'\right)$.
\STATE Update $Q_{\psi}$ by minimizing $\left(r+\gamma y - Q_{\psi}(s, a) \right)^{2}$.
\STATE Update $Q_{\psi^{\prime}}$ by $\psi' \leftarrow \tau \psi+(1-\tau)\psi'$.
\ENDFOR
\FOR{$t=0,1,...,N$}
\STATE Sample mini-batch of transitions $(s,a,r,s') \sim \mathcal{B}$.
% \STATE For each transition, generate action $a^{\prime}_{\pi}  \sim \pi_{\theta}(\cdot|s')$.
\STATE Update $w$ by minimizing $\mathbb{D}_{M}\left(w \| \mathcal{T}^{\pi} w\right)$, which can be computed by Equation (\ref{eq_mmd}).
\STATE Update $\pi_\theta$ by (\ref{eq_objective}).
\ENDFOR
\end{algorithmic}
\end{algorithm}

\section{Experiments}
% \subsection{Settings and Baselines}
We construct experiments on both widely-used D4RL MuJoCo datasets and more complex Adroit hand manipulation environment (visualized in Figure \ref{fig_env}).

We compare SBAC with several strong baselines, including policy regularization methods such as BCQ~\cite{fujimoto2019off}, BEAR~\cite{kumar2019stabilizing}, and BRAC-p/v~\cite{wu2019behavior}; critic penalty methods such as CQL~\cite{kumar2020conservative} and F-BRC~\cite{kostrikov2021offline}. We also compare our method with BRAC+~\cite{zhang2020brac+}, which employed state-dependent behavior regularization, and AlgaeDICE~\cite{nachum2019algaedice}, which constrained the state distribution shift by applying state-visitation-ratio regularization.

\begin{figure*} [htb]
\centering
\includegraphics[width=2.1\columnwidth]{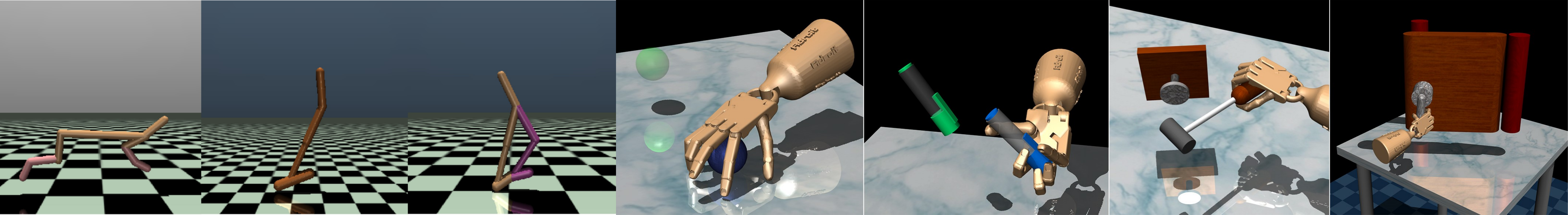}
\caption{Visualization of used environments. From left to right: HalfCheetah, Hopper, Walker2d, Relocate, Pen, Hammer, Door.}
\label{fig_env}
\end{figure*}

\begin{table*}[htb]
\centering
% \small
\caption{Results for Mujoco and Adroit datasets. Each value is the normalized score proposed in \protect\cite{fu2020d4rl} of the policy at the last iteration of training, averaged over 5 random seeds. The scores are undiscounted average returns normalized to lie between 0 and 100, where a score of 0 corresponds to a random policy and 100 corresponds to an expert. Results of BCQ, BEAR, BRAC-p, BRAC-v, AlgaeDICE and CQL are taken from \protect\cite{fu2020d4rl,wu2021uncertainty,zhang2020brac+}. Results of BRAC+ and F-BRAC are taken from their original paper. The highest values are marked bold.
% For all results, we bold the highest value.
}
\begin{tabular}{ccccccccc}
\hline
\multicolumn{1}{c|}{\multirow{2}{*}{Task name}} & \multicolumn{5}{c|}{Policy Regularization}                                                                     & \multicolumn{3}{c}{Critic Penalty}                                 \\ \cline{2-9} 
\multicolumn{1}{c|}{}                           & BCQ                  & BEAR                 & BRAC-p/v             & BRAC+   & \multicolumn{1}{c|}{SBAC(Ours)} & AlgaeDICE            & CQL                  & F-BRC                \\ \hline
\multicolumn{1}{c|}{cheetah-random}             & 2.2                  & 25.1                 & 24.1/31.2            & 26.4±1.0 & \multicolumn{1}{c|}{28.7±1.2}    & -0.3                 & \textbf{35.4}              & 33.3±1.3             \\
\multicolumn{1}{c|}{walker-random}              & 4.9                  & 7.3                  & -0.2/1.9             & \textbf{16.7}±2.3 & \multicolumn{1}{c|}{1.3±1.7}    & 0.5                  & 7.0              & 1.5±0.7              \\
\multicolumn{1}{c|}{hopper-random}              & 10.6                 & 11.4                 & 11.0/12.2            & 12.5±0.3 & \multicolumn{1}{c|}{\textbf{16.7}±1.3}    & 0.9                  & 10.8              & 11.3±0.2             \\
\multicolumn{1}{c|}{cheetah-medium}             & 40.7                 & 41.7                 & 43.8/46.3            & 46.6±0.6 & \multicolumn{1}{c|}{\textbf{51.4}±0.3}    & -2.2                 & 44.4              & 41.3±0.3             \\
\multicolumn{1}{c|}{walker-medium}              & 53.1                 & 59.1                 & 77.5/81.1            & 75.1±3.5 & \multicolumn{1}{c|}{\textbf{81.1}±1.1}    & 0.3                  & 79.2              & 78.8±1.0             \\
\multicolumn{1}{c|}{hopper-medium}              & 54.5                 & 52.1                 & 32.7/31.1            & 53.2±3.1 & \multicolumn{1}{c|}{\textbf{99.6}±1.7}    & 1.2                  & 58.0              & \textbf{99.4}±0.3             \\
\multicolumn{1}{c|}{cheetah-medium-replay}      & 38.2                 & 38.6                 & 45.4/47.7            & 46.1±0.2 & \multicolumn{1}{c|}{\textbf{48.2}±0.2}    & -2.1                 & 46.2              & 43.2±1.5             \\
\multicolumn{1}{c|}{walker-medium-replay}       & 15.0                 & 19.2                 & -0.3/0.9             & 39.0±4.6 & \multicolumn{1}{c|}{\textbf{78.3}±1.2}    & 0.6                  & 26.7              & 41.8±7.9             \\
\multicolumn{1}{c|}{hopper-medium-replay}       & 33.1                 & 33.7                 & 0.6/0.6              & 72.7±18.9 & \multicolumn{1}{c|}{\textbf{98.9}±0.8}   & 1.1                  & 48.6              & 35.6±1.0             \\
\multicolumn{1}{c|}{cheetah-expert}             & /                    & \textbf{108.2}               & 3.8/-1.1             & / & \multicolumn{1}{c|}{98.7±0.1}     & /                    & 104.8              & \textbf{108.4}±0.5            \\
\multicolumn{1}{c|}{walker-expert}              & /                    & 106.10               & -0.2/0.0             & / & \multicolumn{1}{c|}{113.7±0.2}     & /                    & \textbf{153.9}              & 103.0±5.3            \\
\multicolumn{1}{c|}{hopper-expert}              & /                    & 110.30               & 6.6/3.7              & / & \multicolumn{1}{c|}{\textbf{112.2}±0.1}     & /                    & 109.9              & \textbf{112.3}±0.1            \\
\multicolumn{1}{c|}{cheetah-medium-expert}      & 64.7                 & 53.4                 & 44.2/41.9            & 61.2±2.8 & \multicolumn{1}{c|}{\textbf{93.1}±0.8}    & -0.8                 & 62.4              & \textbf{93.3}±10.2            \\
\multicolumn{1}{c|}{walker-medium-expert}       & 57.5                 & 40.1                 & 76.9/81.6            & 95.3±5.9 & \multicolumn{1}{c|}{\textbf{112.4}±1.5}    & 0.4                  & 98.7              & 105.2±3.9            \\
\multicolumn{1}{c|}{hopper-medium-expert}       & 110.9                & 96.3                 & 1.9/0.8              & \textbf{112.9}±0.1 & \multicolumn{1}{c|}{\textbf{112.8}±2.3}    & 1.1                  & 111.0              & \textbf{112.4}±0.3            \\ \hline
\multicolumn{1}{c|}{Mean performance}           & 40.5                  & 53.5                  & 24.5/25.3              & 54.8±3.6 & \multicolumn{1}{c|}{\textbf{76.4}±1.0}    & 0.1                  & 66.5              & 68.0±2.3              \\ \hline
\multicolumn{1}{c|}{pen-human}                  & 68.9                 & -1.0                  & 8.1/0.6              & 64.9±1.6 & \multicolumn{1}{c|}{\textbf{78.8}±2.0}    & -3.3                  & 55.8              & /                    \\
\multicolumn{1}{c|}{hammer-human}               & 0.5                  & 0.3                  & 0.3/0.2              & 3.9±0.9 & \multicolumn{1}{c|}{\textbf{7.7}±1.1}    & 0.3                  & 2.1              & /                    \\
\multicolumn{1}{c|}{door-human}                 & 0.0                  & -0.3                  & -0.3/-0.3              & \textbf{11.5}±1.2 & \multicolumn{1}{c|}{10.1±3.5}    & -0.0                  & 9.1              & /                    \\
\multicolumn{1}{c|}{relocate-human}             & -0.1                 & -0.3                  & -0.3/-0.3              & 0.2±0.11 & \multicolumn{1}{c|}{\textbf{0.6}±0.3}    & -0.1                  & 0.4              & /                    \\ \hline
\multicolumn{1}{c|}{Mean performance}           & 17.3                  & -0.3                  & 1.0/0.1              & 20.1±1.0 & \multicolumn{1}{c|}{\textbf{24.3}±1.7}    & -0.8                  & 16.9              & /                    \\ \hline
\multicolumn{1}{l}{}                            & \multicolumn{1}{l}{} & \multicolumn{1}{l}{} & \multicolumn{1}{l}{} &         & \multicolumn{1}{l}{}            & \multicolumn{1}{l}{} & \multicolumn{1}{l}{} & \multicolumn{1}{l}{}
\end{tabular}
\label{table_d4rl}
\end{table*}

\subsection{Performance on benchmarking datasets for offline RL}
We evaluate our method on the MuJoCo datasets in the D4RL benchmarks~\cite{fu2020d4rl}, including three environments (halfcheetah, hopper, and walker2d) and five dataset types (random, medium, medium-replay, medium-expert, expert), yielding a total of 15 problem settings. The datasets in this benchmark have been generated as follows: \textbf{random}: roll out a randomly initialized policy for 1M steps. \textbf{expert}: 1M samples from a policy trained to completion with SAC. \textbf{medium}: 1M samples from a policy trained to approximately 1/3 the performance of the expert. \textbf{medium-replay}: replay buffer of a policy trained up to the performance of the medium agent. \textbf{medium-expert}: 50-50 split of medium and expert data.

The results of SBAC and competing baselines on MoJoCo datasets are reported in Table \ref{table_d4rl}. SBAC consistently outperforms all policy regularization baselines (BCQ, BEAR, BRAC-p/v, BRAC+) in almost all tasks, sometimes by a large margin. This is not surprising, as most policy regularization baselines use restrictive, state-independent regularization penalties, which lead to over-conservative policy learning. Compared to the less restrictive critic penalty methods, SBAC also surpasses the performance of strong baselines like CQL and F-BRC in a large fraction of tasks. For a few tasks that F-BRC performs better, SBAC achieves comparable performance. It is also observed that SBAC indeed learns more robust and low-variance policies, which have a small standard deviation over seeds compared with policy regularization (BRAC+) and critic penalty (F-BRC) baselines.

\subsection{Performance on Adroit hand tasks with human demonstrations}
We then experiment with a more complex robotic hand manipulation dataset. The Adroit dataset~\cite{rajeswaranlearning} involves controlling a 24-DoF simulated hand to perform 4 tasks including hammering a nail, opening a door, twirling a pen, and picking/moving a ball. This dataset is particularly hard for previous state-of-the-art works in that it contains narrow human demonstrations on a high-dimensional robotic manipulation task. 

It is found that SBAC dominates almost all baselines and achieves state-of-the-art performance. The only exception is the door-human task, in which SBAC reaches comparable performance with BRAC+. This demonstrates the ability of SBAC to learn from complex and non-markovian human datasets.

\subsection{Analysis of the effect of distribution correction}
A notable issue with existing offline RL algorithms is that they exhibit huge variance in performance during evaluation as compared to online trained policies, likely caused by distributional shift and poor generalization \cite{fujimoto2021minimalist}. In Figure \ref{fig_min_eval} and \ref{fig_min_10_eval}, we report the instability test results of SBAC as well as baseline methods CQL, FisherBRC, and an online TD3~\cite{fujimoto2018addressing} policy across 10 evaluations at a specific point or over a period of time.
% SBAC enables more effective alleviation of the distributional shift issue as compared to other existing offline RL algorithms. 
SBAC achieves surprisingly low variances in performance that beats all other offline RL algorithms and exhibits a strong ability of distribution error correction. This is extremely important for many safety-aware real world tasks, which require robust and predictable policy outputs.

As discussed in section~\ref{sec_regularization_derivation}, this mainly results from the following three reasons. First, the value function $Q^\mu$ is completely learned from the data via Fitted Q-evaluation~\cite{le2019batch}, which is much easier and more robust to estimate. Since SBAC does not rely on a Q-value function $Q^\pi$ bond to the learned policy as in typical RL algorithms, it no longer suffers from the accumulating exploitation error in the target values during the bootstrap updates of Q-values.
Second, the removal of the entropy term in the regularization penalty helps to reduce policy entropy, leading to a more uniform and deterministic behavior.
Lastly, the introduction of state-dependent regularization in SBAC places different weights depending on the state visitation ratio, which encourages the learned policy to match the state visitation distribution of the behavior policy.

\begin{figure} [htb]
\centering
\includegraphics[width=1.0\columnwidth]{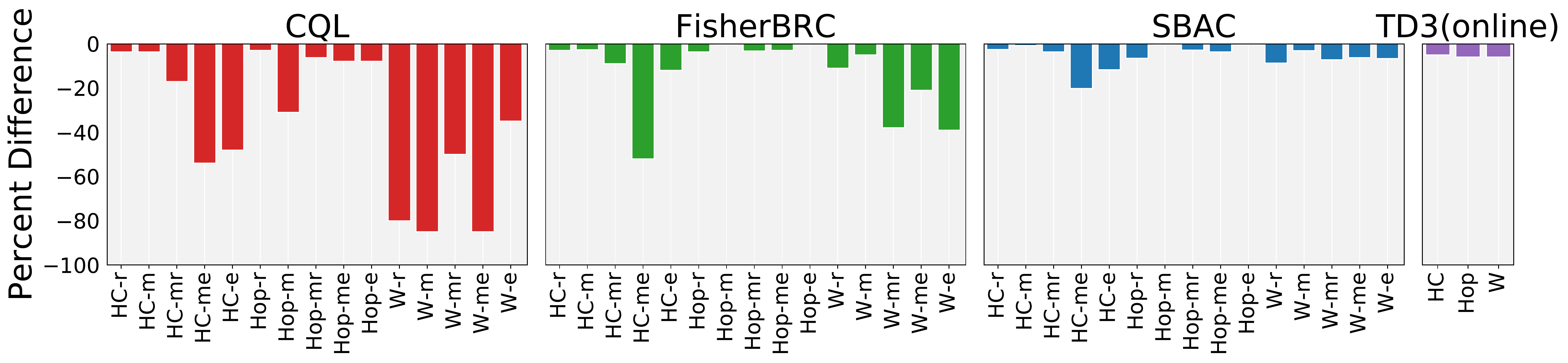}
\caption{Percent difference of the worst episode during the 10 evaluation episodes at the last evaluation. This measures the deviations in performance at a single point in time. HC = HalfCheetah, Hop = Hopper, W = Walker, r = random, m = medium, mr = medium-replay, me = medium-expert, e = expert. It can be shown that online algorithms (TD3) typically have small episode variances as they won't suffer from distribution shift, while our method achieves smaller episode variances when comparing to other offline RL algorithms.}
\label{fig_min_eval}

\includegraphics[width=1.0\columnwidth]{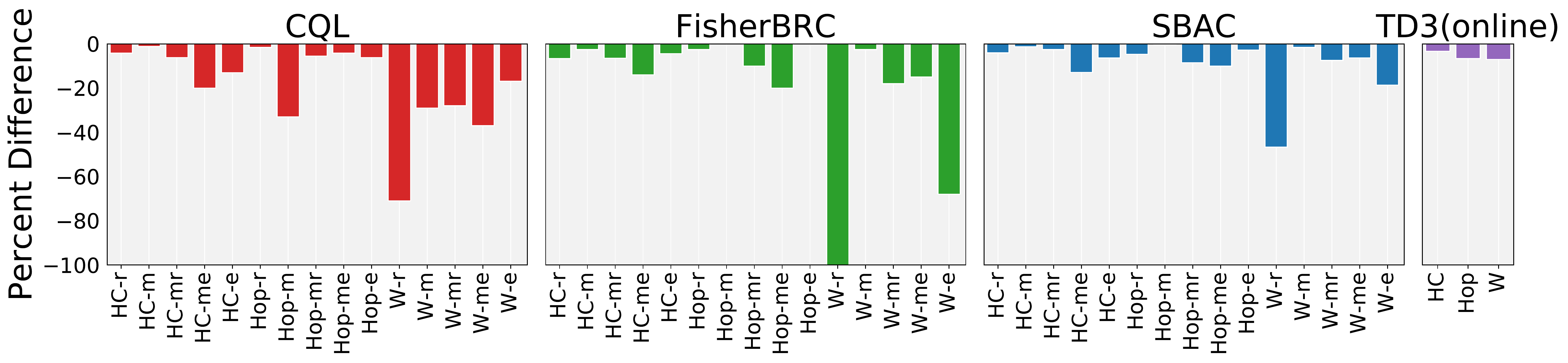}
\caption{Percent difference of the worst evaluation during the last 10 evaluations. This measures the deviations in performance over a period of time. Definitions of x-axis labels are the same as in Figure \ref{fig_min_eval}.
Similar to the result in Figure \ref{fig_min_eval}, all offline RL methods have significant variances near the final stage of training that are absent in the online setting, and our method can alleviate this phenomenon.}
\label{fig_min_10_eval}
\end{figure}

\subsection{Ablations}
In section~\ref{sec_regularization_derivation}, we discussed that compared to objective (\ref{eq_behavior_reg}), SBAC replace $Q^{\pi}$ with $Q^{\mu}$ and use the state visitation ratio $w^{\pi}(s)$ as the state-dependent regularization weight. We now aim to dive deeper into each component and investigate its impact on the training performance.

\noindent \textbf{Ablation 1} Our first ablation isolates the effect
of $w^{\pi}(s)$ on the performance. Formally, we use the following objective to learn the policy, with everything else remains unchanged.
\begin{align*}
\max_{\pi}\underset{s \sim \mathcal{B}}{\mathbb{E}} \left[ \underset{a \sim \pi(\cdot|s)}{\mathbb{E}} \left[ Q^{\mu}(s,a) \right] - \alpha \ \underset{a \sim \pi(\cdot|s)}{\mathbb{E}} \left[ -\log \mu(a|s) \right] \right].
\end{align*}

This objective is similar to objective (\ref{eq_surrogate}), the surrogate objective of online RL methods introduced in section~\ref{sec_improvement_derivation}. In Figure \ref{fig_ablation}, it can be seen that using this objective is actually doing one-step policy optimization against the behavior policy, which is expected to be suboptimal~\cite{brandfonbrener2021offline}.

\noindent \textbf{Ablation 2} Our second ablation isolates the effect
of $Q^{\mu}$ on the performance, formally, we use the following objective to learn the policy
\begin{align*}
\max_{\pi}\underset{s \sim \mathcal{B}}{\mathbb{E}} \left[ \underset{a \sim \pi(\cdot|s)}{\mathbb{E}} \left[ Q^{\pi}(s,a) \right] - \alpha \ \underset{a \sim \pi(\cdot|s)}{\mathbb{E}} \left[ -\log \mu(a|s) \right] \right].
\end{align*}

This objective is almost the same with objective (\ref{eq_behavior_reg}) that uses KL divergence but removes the learned policy's entropy, it can be shown in Figure \ref{fig_ablation} that the learning procedure is more unstable due to the inaccurate estimation of $Q^{\pi}$. Furthermore, the performance of this ablation is lower than SBAC, suggesting that this objective is too restrictive.

\begin{figure*} [htb]
\centering
\includegraphics[width=1.0\columnwidth]{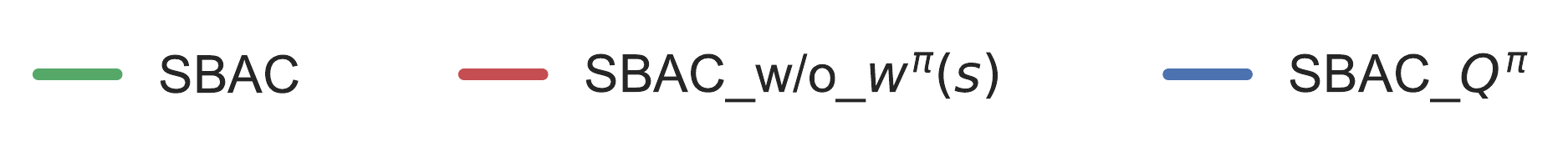}
\includegraphics[width=2.1\columnwidth]{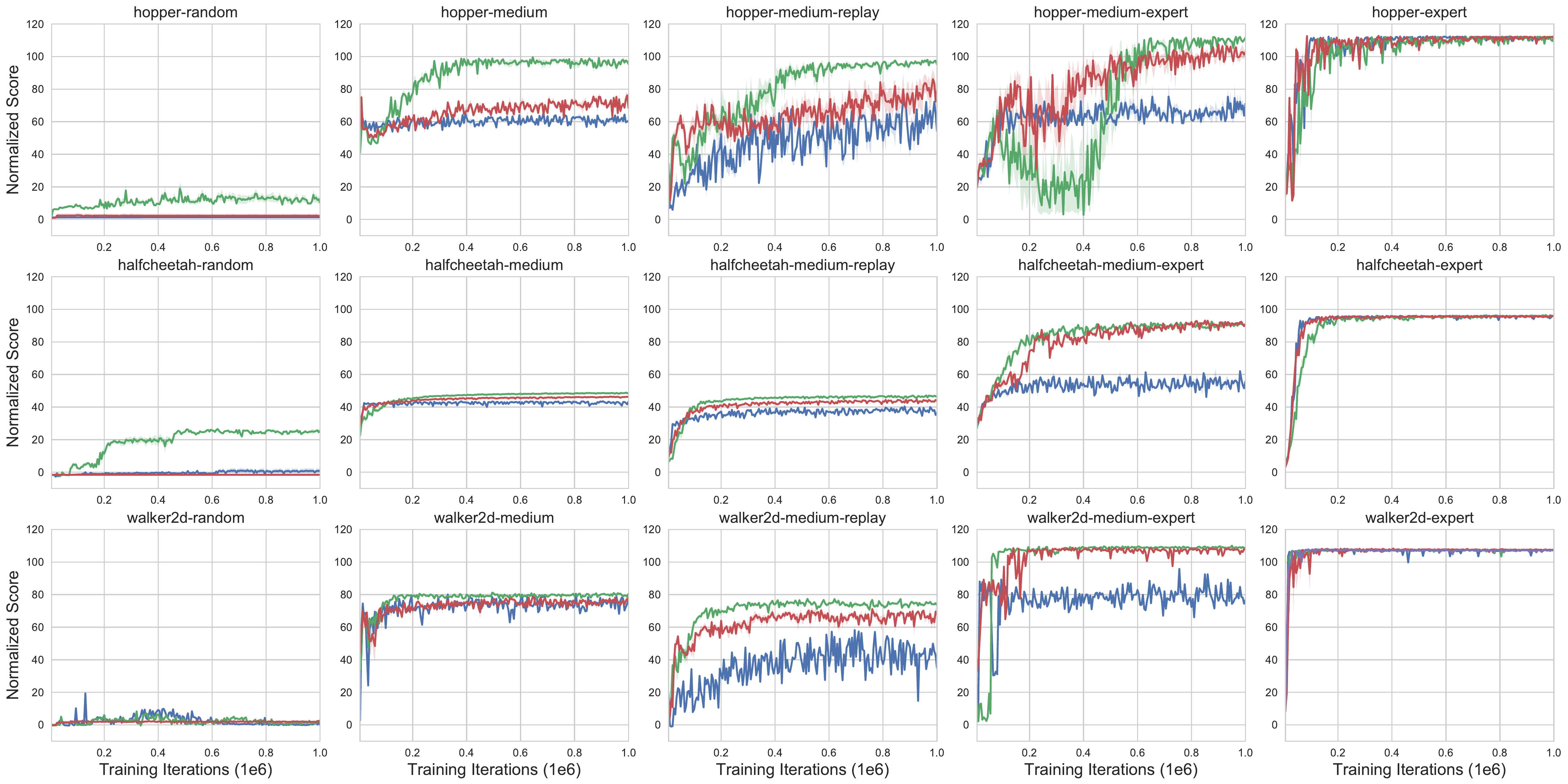}
\caption{Learning curves comparing the performance of SBAC against its ablations in the D4RL datasets. Curves are averaged over 5 seeds, with the shaded area representing the standard deviation across seeds. SBAC achieves higher results comparing to the ablation without $w^{\pi}(s)$ and lower std comparing to the ablation without $w^{\pi}(s)$ and with $Q^{\pi}$ instead of $Q^{\mu}$.}
\label{fig_ablation}
\end{figure*}

\section{Related Work}
% offline rl, state-marginal density ratio, state dependent 
% \section{Behavior regularized offline RL}
Our work contributes to the literature in behavior regularization methods of offline RL~\cite{wu2019behavior}, the primary ingredient of this class of methods is to propose various policy regularizers to ensure that the learned policy does not stray too far from the data generated by the behavior policy.
These regularizers can appear explicitly as divergence penalties~\cite{wu2019behavior,kumar2019stabilizing,fujimoto2021minimalist}, implicitly through weighted behavior cloning~\cite{wang2020critic,peng2019advantage,nair2020accelerating}, or more directly through careful parameterization of the policy~\cite{fujimoto2018addressing,zhou2020latent}. 
Another way to apply behavior regularizers is via modification of the critic learning objective to incorporate some form of regularization to encourage staying near the behavioral distribution and being pessimistic about unknown state-action pairs~\cite{nachum2019algaedice,kumar2020conservative,kostrikov2021offline}. However, being able to properly quantify the uncertainty of unknown states is difficult when dealing with neural network value functions~\cite{buckman2020importance,xu2021constraints}.

In this work, we demonstrate the benefit of using a state-dependent behavior regularization term. This draws the connection to other methods~\cite{laroche2019safe,sohn2020brpo}, that bootstrap the learned policy with the behavior policy only when the current state-action pair's uncertainty is high, allowing the learned policy to differ from the behavior policy for the largest improvements. However, these methods measure the uncertainty by the visitation frequency of state-action pairs in the dataset, which is computationally expensive and nontrivial to apply in continuous control settings. There are also some methods trying to measure the state-dependent regularization weights by neural networks~\cite{zhang2020brac+,lee2020batch}, however, these weights are hard to train and lacks physical interpretation, while also performing inferior compared with our method.
% and these methods perform far from the best as shown in our experiments. 

In our learning objective, we need to estimate the discounted marginal state visitation ratio $d^{\pi}(s)/d^{\mu}(s)$. This draws the connection to the Off-policy Evaluation (OPE) literature, which aims to estimate the performance of a policy given data from a different policy~\cite{hallak2017consistent,liu2018breaking,nachum2019dualdice,gelada2019off,mousavi2019black}. There are also some work incorporating marginal state visitation difference between the learned policy and offline data into the learning objective~\cite{nachum2019algaedice,lee2021optidice}, however, optimizing these resulting objectives need to solve difficult min-max optimization problems, which is susceptible to instability and leads to poor performance. 
This can be shown in our experiments that AlgaeDICE performs poorly and barely solves any tasks. Note that our work is unique in using the marginal state visitation ratio as the regularization weights.

\section{Conclusions and Limitations}
In this paper, we propose a simple yet effective offline RL algorithm. Existing model-free behavior regularized offline RL methods are overly restrictive and often have unsatisfactory performance. Motivated by this limitation, we design a new behavior regularization scheme for offline RL that enables policy improvement guarantee and state-dependent policy regularization. The resulting algorithm, SBAC, performs strongly against state-of-the-art methods in both the standard D4RL dataset and the more complex Adroit tasks. 
One limitation of SBAC is the need to estimate a behavior policy, which may be impacted by the expressiveness of the behavior policy. We leave it as future work to try to avoid estimating it.
% A learned behavior policy with sufficient expressiveness and accuracy might be needed to guarantee the best performance.
% Overall, SBAC exhibits strong performance and robustness, as well as low policy evaluation variance, which can serve as an ideal candidate for solving real-world offline RL tasks.

\bibliography{main}

\begin{thebibliography}{59}
\providecommand{\natexlab}[1]{#1}

\bibitem[{Achiam et~al.(2017)Achiam, Held, Tamar, and
  Abbeel}]{achiam2017constrained}
Achiam, J.; Held, D.; Tamar, A.; and Abbeel, P. 2017.
\newblock Constrained policy optimization.
\newblock In \emph{Proceedings of the 34th International Conference on Machine
  Learning-Volume 70}, 22--31. JMLR. org.

\bibitem[{Bertsekas(1997)}]{bertsekas1997nonlinear}
Bertsekas, D.~P. 1997.
\newblock Nonlinear programming.
\newblock \emph{Journal of the Operational Research Society}, 48(3): 334--334.

\bibitem[{Brandfonbrener et~al.(2021)Brandfonbrener, Whitney, Ranganath, and
  Bruna}]{brandfonbrener2021offline}
Brandfonbrener, D.; Whitney, W.~F.; Ranganath, R.; and Bruna, J. 2021.
\newblock Offline RL Without Off-Policy Evaluation.
\newblock \emph{arXiv preprint arXiv:2106.08909}.

\bibitem[{Buckman, Gelada, and Bellemare(2020)}]{buckman2020importance}
Buckman, J.; Gelada, C.; and Bellemare, M.~G. 2020.
\newblock The Importance of Pessimism in Fixed-Dataset Policy Optimization.
\newblock In \emph{International Conference on Learning Representations}.

\bibitem[{Choi et~al.(2018)Choi, Ha, Hwang, Kim, Ha, and
  Yoon}]{choi2018reinforcement}
Choi, S.; Ha, H.; Hwang, U.; Kim, C.; Ha, J.-W.; and Yoon, S. 2018.
\newblock Reinforcement learning based recommender system using biclustering
  technique.
\newblock \emph{arXiv preprint arXiv:1801.05532}.

\bibitem[{Dadashi et~al.(2021)Dadashi, Rezaeifar, Vieillard, Hussenot,
  Pietquin, and Geist}]{dadashi2021offline}
Dadashi, R.; Rezaeifar, S.; Vieillard, N.; Hussenot, L.; Pietquin, O.; and
  Geist, M. 2021.
\newblock Offline Reinforcement Learning with Pseudometric Learning.
\newblock In \emph{International Conference on Machine Learning (ICML)}.

\bibitem[{Dai et~al.(2017)Dai, He, Pan, Boots, and Song}]{dai2017learning}
Dai, B.; He, N.; Pan, Y.; Boots, B.; and Song, L. 2017.
\newblock Learning from conditional distributions via dual embeddings.
\newblock In \emph{Artificial Intelligence and Statistics}, 1458--1467. PMLR.

\bibitem[{Degris, White, and Sutton(2012)}]{degris2012off}
Degris, T.; White, M.; and Sutton, R.~S. 2012.
\newblock Off-policy actor-critic.
\newblock In \emph{Proceedings of the 29th International Coference on
  International Conference on Machine Learning}, 179--186.

\bibitem[{Fu et~al.(2020)Fu, Kumar, Nachum, Tucker, and Levine}]{fu2020d4rl}
Fu, J.; Kumar, A.; Nachum, O.; Tucker, G.; and Levine, S. 2020.
\newblock D4rl: Datasets for deep data-driven reinforcement learning.
\newblock \emph{arXiv preprint arXiv:2004.07219}.

\bibitem[{Fujimoto and Gu(2021)}]{fujimoto2021minimalist}
Fujimoto, S.; and Gu, S.~S. 2021.
\newblock A Minimalist Approach to Offline Reinforcement Learning.
\newblock \emph{arXiv preprint arXiv:2106.06860}.

\bibitem[{Fujimoto, Hoof, and Meger(2018)}]{fujimoto2018addressing}
Fujimoto, S.; Hoof, H.; and Meger, D. 2018.
\newblock Addressing Function Approximation Error in Actor-Critic Methods.
\newblock In \emph{International Conference on Machine Learning}, 1587--1596.

\bibitem[{Fujimoto, Meger, and Precup(2019)}]{fujimoto2019off}
Fujimoto, S.; Meger, D.; and Precup, D. 2019.
\newblock Off-policy deep reinforcement learning without exploration.
\newblock In \emph{International Conference on Machine Learning}, 2052--2062.
  PMLR.

\bibitem[{Gelada and Bellemare(2019)}]{gelada2019off}
Gelada, C.; and Bellemare, M.~G. 2019.
\newblock Off-policy deep reinforcement learning by bootstrapping the covariate
  shift.
\newblock In \emph{Proceedings of the AAAI Conference on Artificial
  Intelligence}, volume~33, 3647--3655.

\bibitem[{Ghasemipour, Schuurmans, and Gu(2021)}]{ghasemipour2021emaq}
Ghasemipour, S. K.~S.; Schuurmans, D.; and Gu, S.~S. 2021.
\newblock Emaq: Expected-max q-learning operator for simple yet effective
  offline and online rl.
\newblock In \emph{International Conference on Machine Learning}, 3682--3691.
  PMLR.

\bibitem[{Goodfellow et~al.(2014)Goodfellow, Pouget-Abadie, Mirza, Xu,
  Warde-Farley, Ozair, Courville, and Bengio}]{goodfellow2014generative}
Goodfellow, I.; Pouget-Abadie, J.; Mirza, M.; Xu, B.; Warde-Farley, D.; Ozair,
  S.; Courville, A.; and Bengio, Y. 2014.
\newblock Generative adversarial nets.
\newblock In \emph{Advances in neural information processing systems},
  2672--2680.

\bibitem[{Gretton et~al.(2012)Gretton, Borgwardt, Rasch, Sch{\"o}lkopf, and
  Smola}]{gretton2012kernel}
Gretton, A.; Borgwardt, K.~M.; Rasch, M.~J.; Sch{\"o}lkopf, B.; and Smola, A.
  2012.
\newblock A kernel two-sample test.
\newblock \emph{The Journal of Machine Learning Research}, 13(1): 723--773.

\bibitem[{Haarnoja et~al.(2018)Haarnoja, Zhou, Abbeel, and
  Levine}]{haarnoja2018soft}
Haarnoja, T.; Zhou, A.; Abbeel, P.; and Levine, S. 2018.
\newblock Soft Actor-Critic: Off-Policy Maximum Entropy Deep Reinforcement
  Learning with a Stochastic Actor.
\newblock In \emph{International Conference on Machine Learning}, 1861--1870.

\bibitem[{Hallak and Mannor(2017)}]{hallak2017consistent}
Hallak, A.; and Mannor, S. 2017.
\newblock Consistent on-line off-policy evaluation.
\newblock In \emph{International Conference on Machine Learning}, 1372--1383.
  PMLR.

\bibitem[{Jaques et~al.(2020)Jaques, Shen, Ghandeharioun, Ferguson, Lapedriza,
  Jones, Gu, and Picard}]{jaques2020human}
Jaques, N.; Shen, J.~H.; Ghandeharioun, A.; Ferguson, C.; Lapedriza, A.; Jones,
  N.; Gu, S.; and Picard, R. 2020.
\newblock Human-centric Dialog Training via Offline Reinforcement Learning.
\newblock In \emph{Proceedings of the 2020 Conference on Empirical Methods in
  Natural Language Processing (EMNLP)}, 3985--4003.

\bibitem[{Kakade and Langford(2002)}]{kakade2002approximately}
Kakade, S.; and Langford, J. 2002.
\newblock Approximately optimal approximate reinforcement learning.
\newblock In \emph{ICML}, volume~2, 267--274.

\bibitem[{Kostrikov et~al.(2021)Kostrikov, Fergus, Tompson, and
  Nachum}]{kostrikov2021offline}
Kostrikov, I.; Fergus, R.; Tompson, J.; and Nachum, O. 2021.
\newblock Offline reinforcement learning with fisher divergence critic
  regularization.
\newblock In \emph{International Conference on Machine Learning}, 5774--5783.
  PMLR.

\bibitem[{Kumar et~al.(2019)Kumar, Fu, Soh, Tucker, and
  Levine}]{kumar2019stabilizing}
Kumar, A.; Fu, J.; Soh, M.; Tucker, G.; and Levine, S. 2019.
\newblock Stabilizing off-policy q-learning via bootstrapping error reduction.
\newblock In \emph{Advances in Neural Information Processing Systems},
  11761--11771.

\bibitem[{Kumar, Gupta, and Levine(2020)}]{kumar2020discor}
Kumar, A.; Gupta, A.; and Levine, S. 2020.
\newblock DisCor: Corrective Feedback in Reinforcement Learning via
  Distribution Correction.
\newblock \emph{Advances in Neural Information Processing Systems}, 33.

\bibitem[{Kumar et~al.(2020)Kumar, Zhou, Tucker, and
  Levine}]{kumar2020conservative}
Kumar, A.; Zhou, A.; Tucker, G.; and Levine, S. 2020.
\newblock Conservative q-learning for offline reinforcement learning.
\newblock \emph{Advances in Neural Information Processing Systems}, 33.

\bibitem[{Lange, Gabel, and Riedmiller(2012)}]{lange2012batch}
Lange, S.; Gabel, T.; and Riedmiller, M. 2012.
\newblock Batch reinforcement learning.
\newblock In \emph{Reinforcement learning}, 45--73. Springer.

\bibitem[{Laroche, Trichelair, and Des~Combes(2019)}]{laroche2019safe}
Laroche, R.; Trichelair, P.; and Des~Combes, R.~T. 2019.
\newblock Safe policy improvement with baseline bootstrapping.
\newblock In \emph{International Conference on Machine Learning}, 3652--3661.
  PMLR.

\bibitem[{Le, Voloshin, and Yue(2019)}]{le2019batch}
Le, H.; Voloshin, C.; and Yue, Y. 2019.
\newblock Batch Policy Learning under Constraints.
\newblock In \emph{International Conference on Machine Learning}, 3703--3712.

\bibitem[{Lee et~al.(2020)Lee, Lee, Vrancx, Kim, and Kim}]{lee2020batch}
Lee, B.; Lee, J.; Vrancx, P.; Kim, D.; and Kim, K.-E. 2020.
\newblock Batch reinforcement learning with hyperparameter gradients.
\newblock In \emph{International Conference on Machine Learning}, 5725--5735.
  PMLR.

\bibitem[{Lee et~al.(2021)Lee, Jeon, Lee, Pineau, and Kim}]{lee2021optidice}
Lee, J.; Jeon, W.; Lee, B.-J.; Pineau, J.; and Kim, K.-E. 2021.
\newblock OptiDICE: Offline Policy Optimization via Stationary Distribution
  Correction Estimation.
\newblock \emph{arXiv preprint arXiv:2106.10783}.

\bibitem[{Levin and Peres(2017)}]{levin2017markov}
Levin, D.~A.; and Peres, Y. 2017.
\newblock \emph{Markov chains and mixing times}, volume 107.
\newblock American Mathematical Soc.

\bibitem[{Levine et~al.(2016)Levine, Finn, Darrell, and Abbeel}]{levine2016end}
Levine, S.; Finn, C.; Darrell, T.; and Abbeel, P. 2016.
\newblock End-to-End Training of Deep Visuomotor Policies.
\newblock \emph{Journal of Machine Learning Research}, 17(39): 1--40.

\bibitem[{Levine et~al.(2020)Levine, Kumar, Tucker, and Fu}]{levine2020offline}
Levine, S.; Kumar, A.; Tucker, G.; and Fu, J. 2020.
\newblock Offline reinforcement learning: Tutorial, review, and perspectives on
  open problems.
\newblock \emph{arXiv preprint arXiv:2005.01643}.

\bibitem[{Lillicrap et~al.(2016)Lillicrap, Hunt, Pritzel, Heess, Erez, Tassa,
  Silver, and Wierstra}]{lillicrap2016continuous}
Lillicrap, T.~P.; Hunt, J.~J.; Pritzel, A.; Heess, N.; Erez, T.; Tassa, Y.;
  Silver, D.; and Wierstra, D. 2016.
\newblock Continuous control with deep reinforcement learning.
\newblock In \emph{ICLR (Poster)}.

\bibitem[{Liu et~al.(2018)Liu, Li, Tang, and Zhou}]{liu2018breaking}
Liu, Q.; Li, L.; Tang, Z.; and Zhou, D. 2018.
\newblock Breaking the curse of horizon: Infinite-horizon off-policy
  estimation.
\newblock In \emph{Advances in Neural Information Processing Systems},
  5356--5366.

\bibitem[{Mnih et~al.(2013)Mnih, Kavukcuoglu, Silver, Graves, Antonoglou,
  Wierstra, and Riedmiller}]{mnih2013playing}
Mnih, V.; Kavukcuoglu, K.; Silver, D.; Graves, A.; Antonoglou, I.; Wierstra,
  D.; and Riedmiller, M. 2013.
\newblock Playing atari with deep reinforcement learning.
\newblock \emph{arXiv preprint arXiv:1312.5602}.

\bibitem[{Mousavi et~al.(2020)Mousavi, Li, Liu, and Zhou}]{mousavi2019black}
Mousavi, A.; Li, L.; Liu, Q.; and Zhou, D. 2020.
\newblock Black-box Off-policy Estimation for Infinite-Horizon Reinforcement
  Learning.
\newblock In \emph{International Conference on Learning Representations
  (ICLR)}.

\bibitem[{Muandet et~al.(2017)Muandet, Fukumizu, Sriperumbudur, and
  Sch{\"o}lkopf}]{muandet2017kernel}
Muandet, K.; Fukumizu, K.; Sriperumbudur, B.; and Sch{\"o}lkopf, B. 2017.
\newblock Kernel Mean Embedding of Distributions: A Review and Beyond.
\newblock \emph{Foundations and Trends in Machine Learning}, 10(1-2): 1--144.

\bibitem[{Nachum et~al.(2019{\natexlab{a}})Nachum, Chow, Dai, and
  Li}]{nachum2019dualdice}
Nachum, O.; Chow, Y.; Dai, B.; and Li, L. 2019{\natexlab{a}}.
\newblock Dualdice: Behavior-agnostic estimation of discounted stationary
  distribution corrections.
\newblock In \emph{Advances in Neural Information Processing Systems},
  2318--2328.

\bibitem[{Nachum et~al.(2019{\natexlab{b}})Nachum, Dai, Kostrikov, Chow, Li,
  and Schuurmans}]{nachum2019algaedice}
Nachum, O.; Dai, B.; Kostrikov, I.; Chow, Y.; Li, L.; and Schuurmans, D.
  2019{\natexlab{b}}.
\newblock Algaedice: Policy gradient from arbitrary experience.
\newblock \emph{arXiv preprint arXiv:1912.02074}.

\bibitem[{Nair et~al.(2020)Nair, Dalal, Gupta, and
  Levine}]{nair2020accelerating}
Nair, A.; Dalal, M.; Gupta, A.; and Levine, S. 2020.
\newblock Accelerating online reinforcement learning with offline datasets.
\newblock \emph{arXiv preprint arXiv:2006.09359}.

\bibitem[{Nguyen, Wainwright, and Jordan(2010)}]{nguyen2010estimating}
Nguyen, X.; Wainwright, M.~J.; and Jordan, M.~I. 2010.
\newblock Estimating divergence functionals and the likelihood ratio by convex
  risk minimization.
\newblock \emph{IEEE Transactions on Information Theory}, 56(11): 5847--5861.

\bibitem[{Peng et~al.(2019)Peng, Kumar, Zhang, and Levine}]{peng2019advantage}
Peng, X.~B.; Kumar, A.; Zhang, G.; and Levine, S. 2019.
\newblock Advantage-weighted regression: Simple and scalable off-policy
  reinforcement learning.
\newblock \emph{arXiv preprint arXiv:1910.00177}.

\bibitem[{Pirotta et~al.(2013)Pirotta, Restelli, Pecorino, and
  Calandriello}]{pirotta2013safe}
Pirotta, M.; Restelli, M.; Pecorino, A.; and Calandriello, D. 2013.
\newblock Safe policy iteration.
\newblock In \emph{International Conference on Machine Learning}, 307--315.
  PMLR.

\bibitem[{Rajeswaran et~al.(2017)Rajeswaran, Kumar, Gupta, Vezzani, Schulman,
  Todorov, and Levine}]{rajeswaranlearning}
Rajeswaran, A.; Kumar, V.; Gupta, A.; Vezzani, G.; Schulman, J.; Todorov, E.;
  and Levine, S. 2017.
\newblock Learning complex dexterous manipulation with deep reinforcement
  learning and demonstrations.
\newblock \emph{arXiv preprint arXiv:1709.10087}.

\bibitem[{Satija, Amortila, and Pineau(2020)}]{satija2020constrained}
Satija, H.; Amortila, P.; and Pineau, J. 2020.
\newblock Constrained markov decision processes via backward value functions.
\newblock In \emph{International Conference on Machine Learning}, 8502--8511.
  PMLR.

\bibitem[{Schulman et~al.(2015)Schulman, Levine, Abbeel, Jordan, and
  Moritz}]{schulman2015trust}
Schulman, J.; Levine, S.; Abbeel, P.; Jordan, M.; and Moritz, P. 2015.
\newblock Trust region policy optimization.
\newblock In \emph{International conference on machine learning}, 1889--1897.

\bibitem[{Schulman et~al.(2017)Schulman, Wolski, Dhariwal, Radford, and
  Klimov}]{schulman2017proximal}
Schulman, J.; Wolski, F.; Dhariwal, P.; Radford, A.; and Klimov, O. 2017.
\newblock Proximal policy optimization algorithms.
\newblock \emph{arXiv preprint arXiv:1707.06347}.

\bibitem[{Siegel et~al.(2019)Siegel, Springenberg, Berkenkamp, Abdolmaleki,
  Neunert, Lampe, Hafner, Heess, and Riedmiller}]{siegel2019keep}
Siegel, N.; Springenberg, J.~T.; Berkenkamp, F.; Abdolmaleki, A.; Neunert, M.;
  Lampe, T.; Hafner, R.; Heess, N.; and Riedmiller, M. 2019.
\newblock Keep Doing What Worked: Behavior Modelling Priors for Offline
  Reinforcement Learning.
\newblock In \emph{International Conference on Learning Representations}.

\bibitem[{Sohn et~al.(2020)Sohn, Chow, Ooi, Nachum, Lee, Chi, and
  Boutilier}]{sohn2020brpo}
Sohn, S.; Chow, Y.; Ooi, J.; Nachum, O.; Lee, H.; Chi, E.; and Boutilier, C.
  2020.
\newblock BRPO: Batch Residual Policy Optimization.
\newblock \emph{arXiv preprint arXiv:2002.05522}.

\bibitem[{Sutton, Barto et~al.(1998)}]{sutton1998introduction}
Sutton, R.~S.; Barto, A.~G.; et~al. 1998.
\newblock \emph{Introduction to reinforcement learning}, volume 135.
\newblock MIT press Cambridge.

\bibitem[{Wang et~al.(2008)Wang, Bowling, Schuurmans, and
  Lizotte}]{wang2008stable}
Wang, T.; Bowling, M.; Schuurmans, D.; and Lizotte, D.~J. 2008.
\newblock Stable dual dynamic programming.
\newblock In \emph{Advances in neural information processing systems},
  1569--1576. Citeseer.

\bibitem[{Wang, He, and Tan(2020)}]{wang2020truly}
Wang, Y.; He, H.; and Tan, X. 2020.
\newblock Truly proximal policy optimization.
\newblock In \emph{Uncertainty in Artificial Intelligence}, 113--122. PMLR.

\bibitem[{Wang et~al.(2020)Wang, Novikov, Zolna, Merel, Springenberg, Reed,
  Shahriari, Siegel, Gulcehre, Heess et~al.}]{wang2020critic}
Wang, Z.; Novikov, A.; Zolna, K.; Merel, J.~S.; Springenberg, J.~T.; Reed,
  S.~E.; Shahriari, B.; Siegel, N.; Gulcehre, C.; Heess, N.; et~al. 2020.
\newblock Critic Regularized Regression.
\newblock \emph{Advances in Neural Information Processing Systems}, 33.

\bibitem[{Wu, Tucker, and Nachum(2019)}]{wu2019behavior}
Wu, Y.; Tucker, G.; and Nachum, O. 2019.
\newblock Behavior Regularized Offline Reinforcement Learning.
\newblock \emph{arXiv preprint arXiv:1911.11361}.

\bibitem[{Wu et~al.(2021)Wu, Zhai, Srivastava, Susskind, Zhang, Salakhutdinov,
  and Goh}]{wu2021uncertainty}
Wu, Y.; Zhai, S.; Srivastava, N.; Susskind, J.; Zhang, J.; Salakhutdinov, R.;
  and Goh, H. 2021.
\newblock Uncertainty Weighted Actor-Critic for Offline Reinforcement Learning.
\newblock In \emph{International Conference on Machine Learning (ICML)}.

\bibitem[{Xu, Zhan, and Zhu(2021)}]{xu2021constraints}
Xu, H.; Zhan, X.; and Zhu, X. 2021.
\newblock Constraints Penalized Q-Learning for Safe Offline Reinforcement
  Learning.
\newblock \emph{arXiv preprint arXiv:2107.09003}.

\bibitem[{Zhan et~al.(2021)Zhan, Xu, Zhang, Huo, Zhu, Yin, and
  Zheng}]{zhan2021deepthermal}
Zhan, X.; Xu, H.; Zhang, Y.; Huo, Y.; Zhu, X.; Yin, H.; and Zheng, Y. 2021.
\newblock DeepThermal: Combustion Optimization for Thermal Power Generating
  Units Using Offline Reinforcement Learning.
\newblock \emph{arXiv preprint arXiv:2102.11492}.

\bibitem[{Zhang, Kuppannagari, and Prasanna(2020)}]{zhang2020brac+}
Zhang, C.; Kuppannagari, S.~R.; and Prasanna, V. 2020.
\newblock BRAC+: Going Deeper with Behavior Regularized Offline Reinforcement
  Learning.

\bibitem[{Zhou, Bajracharya, and Held(2020)}]{zhou2020latent}
Zhou, W.; Bajracharya, S.; and Held, D. 2020.
\newblock Latent Action Space for Offline Reinforcement Learning.
\newblock In \emph{Conference on Robot Learning}.

\end{thebibliography}

\appendix
\onecolumn

\section{Proof}
\noindent \textbf{Notations}
Given a learned policy $\pi$,
% denote the state-action \textit{policy} generating distribution as $\mu^\pi$.
% $\mu^\pi$ Given a policy $\pi$, 
let $d_{t}^{\pi}(s)$ be the marginal distribution of $s_{t}$ under $\pi$, that is, $d_{t}^{\pi}(s):=\operatorname{Pr}\left[s_{t}=s | s_{0} \sim \rho, \pi\right], d_{t}^{\pi}(s, a)=d_{t}^{\pi}(s) \pi(a | s)$, and the state-action \textit{policy} generating distribution $d^\pi$ can be expressed as $d^{\pi}(s, a)=(1-\gamma) \sum_{t=0}^{\infty} \gamma^{t} d_{t}^{\pi}(s, a)$. 
Similarly, denote the state-action \textit{data} generating distribution as $d^{\mu}$, induced by some data-generating (behavior) policy $\mu$, that is, $\left(s_{i}, a_{i}\right) \sim d^{\mu}(s,a)$ for $\left(s_{i}, a_{i}, s_{i}^{\prime}, r_{i} \right) \in \mathcal{B}$. Note that data set $\mathcal{B}$ is formed by multiple trajectories generated by $\mu$. 
For each $\left(s_{i}, a_{i}\right)$, we have $s_{i}^{\prime} \sim P\left(\cdot | s_{i}, a_{i}\right)$, $r_{i}=r\left(s_{i}, a_{i}\right)$.

\begin{theorem} Assume $\mu(a|s) > 0$ whenever $\pi(a|s) > 0$, we have function $w(s) = w^{\pi}(s)$ if and only if it satisfies
\begin{align*}
&\underset{\left(s, a, s^{\prime}\right) \sim d^{\mu}}{\mathbb{E}}\left[ \mathbb{D}(w(s')~\|~\mathcal{T}^{\pi}w(s')) \right]=0, \quad \forall s'. \\
&\text { with } \quad \mathcal{T}^{\pi} w(s') := (1-\gamma) + \gamma \underset{(s,a)|s'}{\mathbb{E}} \frac{\pi(a|s)}{\mu(a|s)} w(s),
\end{align*}
where $\mathbb{D}(\cdot \| \cdot)$ is some discrepancy function between distributions and $(s,a)|s'$ is a time-reserved conditional probability, it is the conditional distribution of $(s,a)$ given that their next state is $s'$ following policy $\mu$.
\end{theorem}

\begin{proof}
According to the definition of $d^{\pi}$, we have
\begin{align*}
d^{\pi}\left(s^{\prime}\right) &=(1-\gamma) \sum_{t=0} \gamma^{t} d^{\pi}_{t}\left(s^{\prime}\right) \\
&=(1-\gamma) \rho\left(s^{\prime}\right)+(1-\gamma) \sum_{t=1}^{\infty} \gamma^{t} d^{\pi}_{t}\left(s^{\prime}\right) \\
&=(1-\gamma) \rho\left(s^{\prime}\right)+(1-\gamma) \gamma \sum_{t=0}^{\infty} \gamma^{t} d^{\pi}_{t+1}\left(s^{\prime}\right) \\
&=(1-\gamma) \rho\left(s^{\prime}\right)+(1-\gamma) \gamma \sum_{t=0}^{\infty} \gamma^{t} \sum_{s} P_{\pi}\left(s^{\prime} | s\right) d^{\pi}_{t}(s) \\
&=(1-\gamma) \rho\left(s^{\prime}\right)+\gamma \sum_{s} P_{\pi}\left(s^{\prime} | s\right)\left((1-\gamma) \sum_{t=0}^{\infty} \gamma^{t} d^{\pi}_{t}(s)\right) \\
&=(1-\gamma) \rho\left(s^{\prime}\right)+\gamma \sum_{s} P_{\pi}\left(s^{\prime} | s\right) d^{\pi}(s) \\
&=(1-\gamma) \rho\left(s^{\prime}\right)+\gamma \sum_{s, a} P\left(s^{\prime} | s, a\right) \pi(a | s) d^{\pi}(s).
\end{align*}

Then we have
\begin{align*}
d^{\mu}\left(s^{\prime}\right) \frac{d^{\pi}\left(s^{\prime}\right)}{d^{\mu}\left(s^{\prime}\right)} &= (1-\gamma) \rho\left(s^{\prime}\right)+\gamma \sum_{s, a} P\left(s^{\prime} | s, a\right) \pi(a | s) d^{\mu}(s) \frac{d^{\pi}(s)}{d^{\mu}(s)} \\
&= (1-\gamma) \rho\left(s^{\prime}\right)+\gamma \sum_{s, a} P\left(s^{\prime} | s, a\right) \pi(a | s) d^{\mu}(s) \frac{d^{\pi}(s)}{d^{\mu}(s)} \\
&= (1-\gamma) \rho\left(s^{\prime}\right)+\gamma \sum_{s, a} P\left(s^{\prime} | s, a\right) \mu(a | s) \frac{\pi(a | s)}{\mu(a | s)} d^{\mu}(s) \frac{d^{\pi}(s)}{d^{\mu}(s)}.
\end{align*}

Summing both sides over $s^{\prime}$, we get
\begin{align*}
\sum_{s^{\prime}} d^{\mu}\left(s^{\prime}\right) \frac{d^{\pi}\left(s^{\prime}\right)}{d^{\mu}\left(s^{\prime}\right)} = \sum_{s^{\prime}} (1-\gamma) \rho\left(s^{\prime}\right)+\gamma \sum_{s, a, s^{\prime}} P\left(s^{\prime} | s, a\right) \mu(a | s) \frac{\pi(a | s)}{\mu(a | s)} d^{\mu}(s) \frac{d^{\pi}(s)}{d^{\mu}(s)}.
\end{align*}

The above equation is equivalent to
\begin{align*}
\mathbb{E}_{s^{\prime} \sim d^{\mu}} \left[ \frac{d^{\pi}\left(s^{\prime}\right)}{d^{\mu}\left(s^{\prime}\right)} \right] &= \mathbb{E}_{s' \sim \rho} \left[ 1-\gamma \right] + \gamma \mathbb{E}_{s,a,s^{\prime} \sim d^{\mu}} \frac{\pi(a | s)}{\mu(a | s)} \left[ \frac{d^{\pi}(s)}{d^{\mu}(s)} \right] \\
&= (1-\gamma) + \gamma \mathbb{E}_{s,a,s^{\prime} \sim d^{\mu}} \left[ \frac{\pi(a | s)}{\mu(a | s)} \frac{d^{\pi}(s)}{d^{\mu}(s)} \right].
\end{align*}

Therefore, 
\begin{align*}
\mathbb{E}_{s,a,s^{\prime} \sim d^{\mu}} \left[  \frac{d^{\pi}\left(s^{\prime}\right)}{d^{\mu}\left(s^{\prime}\right)} - (1-\gamma) - \gamma \frac{\pi(a | s)}{\mu(a | s)} \frac{d^{\pi}(s)}{d^{\mu}(s)} \right] = 0.
\end{align*}

Denoting $w^{\pi}(s)=d^{\pi}(s) / d^{\mu}(s)$ and $\mathcal{T}^{\pi} w(s') := (1-\gamma) + \gamma \underset{(s,a)|s'}{\mathbb{E}} \left[ \frac{\pi(a|s)}{\mu(a|s)} w(s) \right]$ then we can have theorem 1.

\end{proof}

\section{Implementation Details}
% \noindent \textbf{Implementation Details}
All experiments are implemented with Tensorflow and executed on NVIDIA V100 GPUs.
For all function approximators, we use fully connected neural networks with RELU activations. For policy networks, we use tanh (Gaussian) on outputs. 
As in other deep RL algorithms, we maintain source and target Q-functions with an update rate 0.005 per iteration.
We use Adam for all optimizers. The batch size is 256 and $\gamma$ is 0.99.
We rescale the reward to $[0,1]$ as $r^{\prime}=\left(r-r_{\min }\right) /\left(r_{\max }-r_{\min }\right)$, where $r_{\max}$ and $r_{\min}$ is the maximum and the minimum reward in the dataset, note that any affine transformation of the reward function does not change the optimal policy of the MDP. 
Behavior, Actor, critic and density ratio network $w$ are all 3-layer MLP with 256 hidden units each layer. 
The learning rate of actor and behavior network is $1e-5$, the learning rate of critic and density ratio network is $1e-4$.
% , in each training iteration, the density ratio network is updated for 3 steps (which we find better than only 1 step). 
We search $\alpha$ in $\{0.2, 0.5, 2.0, 5.0\}$. We clip $\log w$ by 2.0 to avoid numerically instability.

\end{document}